\newtheorem{proof}{Proof}
\newtheorem{theorem}{Theorem}
\def\BibTeX{{\rm B\kern-.05em{\sc i\kern-.025em b}\kern-.08em
    T\kern-.1667em\lower.7ex\hbox{E}\kern-.125emX}}
\begin{document}

\title{TriCon-SF: A Triple-Shuffle and Contribution-Aware Serial Federated Learning Framework for Heterogeneous Healthcare Data\\
}

\author{\IEEEauthorblockN{1\textsuperscript{st} Yuping Yan}
\IEEEauthorblockA{\textit{School of Engineering} \\
\textit{Westlake University}\\
Hangzhou 310030, China \\
yanyuping@westlake.edu.cn}
\and
\IEEEauthorblockN{2\textsuperscript{nd} Yizhi Wang}
\IEEEauthorblockA{\textit{School of Engineering} \\
\textit{Westlake University}\\
Hangzhou 310030, China \\
wangyizhi@westlake.edu.cn}
\and
\IEEEauthorblockN{3\textsuperscript{rd} Yuanshuai Li}
\IEEEauthorblockA{\textit{School of Engineering} \\
\textit{Westlake University}\\
Hangzhou 310030, China \\
}
\and
\IEEEauthorblockN{4\textsuperscript{th} Yaochu Jin}
\IEEEauthorblockA{\textit{School of Engineering} \\
\textit{Westlake University}\\
Hangzhou 310030, China \\
jinyaochu@westlake.edu.cn}
}

\maketitle

\begin{abstract}
Serial pipeline training is an efficient paradigm for handling data heterogeneity in cross-silo federated learning with low communication overhead. However, even without centralized aggregation, direct transfer of models between clients can violate privacy regulations and remain susceptible to gradient leakage and linkage attacks. Additionally, ensuring resilience against semi-honest or malicious clients who may manipulate or misuse received models remains a grand challenge, particularly in privacy-sensitive domains such as healthcare. To address these challenges, we propose TriCon-SF, a novel serial federated learning framework that integrates triple shuffling and contribution awareness. TriCon-SF introduces three levels of randomization by shuffling model layers, data segments, and training sequences to break deterministic learning patterns and disrupt potential attack vectors, thereby enhancing privacy and robustness. In parallel, it leverages Shapley value methods to dynamically evaluate client contributions during training, enabling the detection of dishonest behavior and enhancing system accountability. Extensive experiments on non-IID healthcare datasets demonstrate that TriCon-SF outperforms standard serial and parallel federated learning in both accuracy and communication efficiency. Security analysis further supports its resilience against client-side privacy attacks.

\end{abstract}

\begin{IEEEkeywords}
serial federated learning, shuffle, privacy-preserving, fairness, non-IID
\end{IEEEkeywords}

\section{INTRODUCTION}
Healthcare data, such as medical images, electronic health records (EHRs), and clinical notes, play a vital role in supporting clinical workflows such as diagnostic assistance \cite{zhou2024ai}, risk prediction, and postoperative assessment \cite{arora2023value}. However, the highly sensitive nature of healthcare data and strict privacy regulations (e.g., HIPAA \cite{annas2003hipaa}, GDPR \cite{protection2018general}) severely limit the feasibility of centralized data collection and sharing. As a result, medical and industrial entities often face the challenge of isolated and fragmented datasets. Additionally, healthcare data is inherently heterogeneous, varying significantly in modality, distribution, and patient demographics, which poses further challenges to collaborative and scalable learning \cite{hu2025split}.

Federated Learning (FL) \cite{kairouz2021advances} has emerged as a promising paradigm to address these challenges by enabling distributed model training across decentralized clients without requiring raw data exchange. Parallel FL methods such as FedAvg \cite{mcmahan2017communication} have demonstrated significant success by aggregating model updates from participating clients via a central server, thereby improving model generalization capability while preserving data locality. However, parallel FL may not be optimal for all scenarios, particularly in cross-silo healthcare settings where communication efficiency, fast convergence, and resilience to non-independent and identically distributed (non-IID) data are essential \cite{chang2018distributed,zhu2021federated}.

To overcome the limitations of parallel FL, serial approaches such as Cyclic Weight Transfer (CWT) \cite{chang2018distributed} and Split Learning (SL) \cite{vepakomma2018split} have been proposed. These methods eliminate the need for centralized aggregation by enabling sequential, client-to-client model training. Serial FL is particularly well suited for healthcare scenarios, where reducing communication overhead and avoiding central coordination are crucial \cite{qu2022rethinking,sheng2023federated}. Despite these advantages, two fundamental challenges remain unaddressed, as shown in Figure \ref{fig:problem}:
\begin{itemize}
    \item \textbf{Privacy risk in model transfer:} The direct handoff of models between clients may violate privacy regulations and remain vulnerable to gradient leakage and linkage attacks. In addition, semi-honest clients may tamper with or exploit received models without proper accountability.
    \item \textbf{Sensitivity to data heterogeneity:} Serial FL is particularly sensitive to non-IID data, often suffering from catastrophic forgetting and unstable convergence due to sequential training on disjoint data distributions.
\end{itemize}
\begin{figure*}
    \centering
    \includegraphics[width=\linewidth]{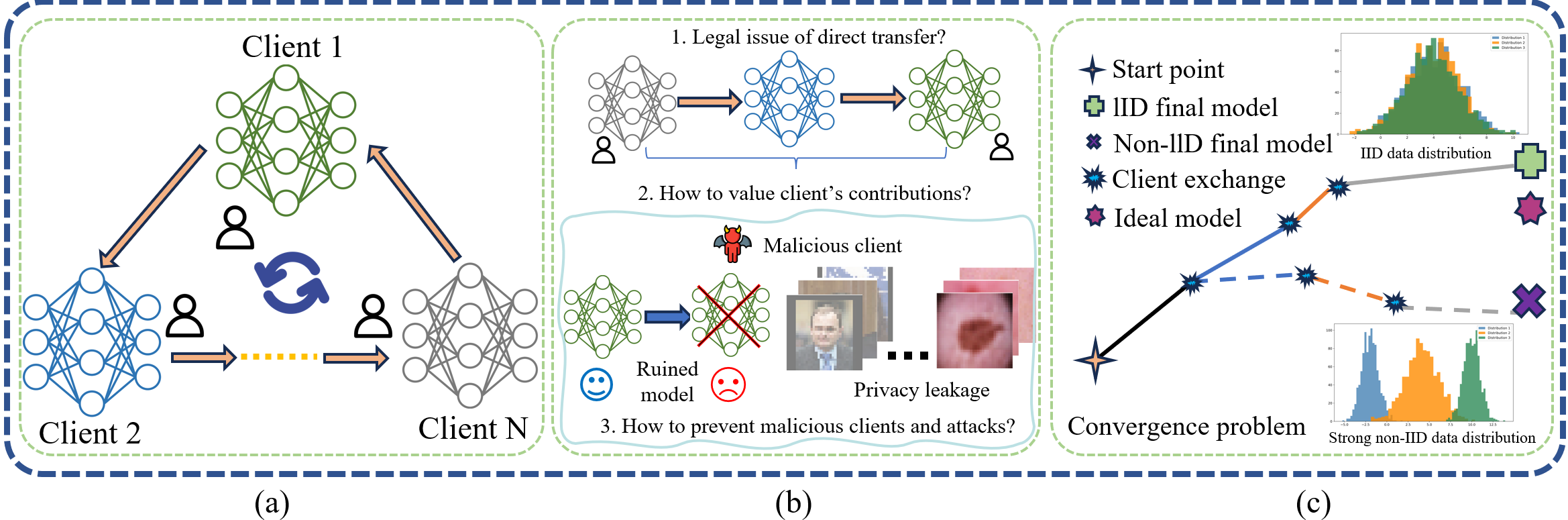}
    \caption{Problem formulation of serial federated learning. (a) Serial federated learning. (b) Privacy risks arise from both malicious clients and gradient leakage attacks during model transfers. (c) Convergence problems caused by data heterogeneity.}
    \label{fig:problem}
\end{figure*}

Given these concerns, privacy preservation, robustness to adversarial or semi-honest behavior, and efficient learning under non-IID conditions emerge as pressing yet intertwined challenges in the deployment of serial FL for healthcare. 

Although some researchers have incorporated various privacy-preserving techniques into serial FL, such as split learning \cite{vepakomma2018split}, differential privacy \cite{yang2023privatefl}, and encryption methods, these approaches still face significant limitations. For example, split learning \cite{vepakomma2018split} divides the model between the client and the server, sharing only intermediate activations and gradients, but remains vulnerable to information leakage from intermediate representations if not adequately protected. PrivateFL \cite{yang2023privatefl} leverages differential privacy to provide formal privacy guarantees, but this often results in substantial degradation of model accuracy, especially in highly heterogeneous data environments. Other frameworks introduce cryptographic protection methods \cite{khan2023split}, such as homomorphic encryption, but always incur high computational costs and complexity, limiting their scalability and practicality in real-world serial FL scenarios.

Regarding fairness, most existing research has concentrated on parallel FL, employing techniques such as Shapley value-based contribution evaluation \cite{wang2020principled,sun2023shapleyfl}. By contrast, fairness in serial FL remains largely underexplored, despite its increased importance due to the sequential nature of model updates, which can amplify the impact of dishonest or low-contributing participants.

To address these issues, we propose TriCon-SF: a novel triple shuffle and contribution-aware serial federated learning framework tailored for segmented healthcare data. Our framework is designed to simultaneously enhance privacy, fairness, and learning performance in cross-silo environments. Specifically, the key contributions are as follows:
\begin{itemize}
    \item We design a triple-shuffle mechanism that randomizes model layers, data segments, and client training sequences to disrupt deterministic training patterns, mitigate inference risks, and reduce linkability.
    \item A segmented training strategy that partitions datasets into variable-sized segments to better balance heterogeneous data distributions across clients and enhance model convergence.
    \item A contribution-aware mechanism to quantify each client’s impact on the global model, enabling the identification of dishonest behavior and promoting accountability in training.
    \item We conduct extensive experiments on real-world non-IID healthcare datasets, demonstrating that TriCon-SF outperforms existing serial and parallel FL methods in terms of accuracy, communication efficiency, and convergence. Security analysis further supports its resilience against client-side privacy attacks.
    
\end{itemize}
The remainder of the paper is organized as follows. Section II reviews the related work, focusing on federated server-based learning methods and federated cross-learning approaches. Section III presents the problem definitions, including the formalization of the serial FL problem, the challenges of non-IID data, and the issue of model gradient leakage. Section IV introduces the proposed methodology, including the system setup, framework overview, and the contribution-aware evaluation mechanism. Section V provides a theoretical analysis of the computational, communication costs, and security of TriCon-SF. Section VI presents a comprehensive experimental evaluation, covering model accuracy, communication cost, and security analysis. Finally, Section VII concludes the paper.

\section{RELATED WORK}

In recent years, a wide range of FL frameworks have been developed to enable privacy-preserving training across multiple institutions, which can be broadly categorized into server-based aggregation methods and cross-learning approaches.

\subsection{Federated server-based learning methods}
The federated server-based weighted aggregation method, introduced by McMahan et al.\cite{mcmahan2017communication} in 2017, is a foundational paradigm that allows clients to collaboratively train a global model without sharing raw data. Participants compute local updates which are then aggregated by a central server using techniques like Federated Averaging. Building on this foundation, numerous extensions have been proposed to address issues such as system heterogeneity and communication inefficiency. Notable examples include FedProx \cite{li2020federated}, which tackles client heterogeneity; SCAFFOLD \cite{karimireddy2020scaffold}, which reduces client drift via control variates; FedDisco \cite{ye2023feddisco}, which introduces distributed optimization for scalability; and FedLabX \cite{yan2024fedlabx}, which provides practical and privacy-aware FL frameworks.

Despite their effectiveness, these server-based methods often fall short in guaranteeing robust privacy. Gradient leakage attacks \cite{wang2020tackling} have demonstrated that even without direct data sharing, sensitive information can be inferred from shared gradients or model parameters. This creates a persistent privacy vulnerability, especially in sensitive domains such as healthcare and finance.

\subsection{Federated cross-learning methods}
To mitigate communication overhead and server dependency, federated cross-learning methods have emerged as alternatives. One of the earliest and most notable is CWT, introduced by Chang et al. \cite{chang2018distributed}, where the model is trained sequentially across institutions. Each client refines the model using its local data before passing it to the next. This approach minimizes infrastructure requirements and reduces communication complexity compared to server-based aggregation.

Variants of CWT, including Incremental Institutional Learning (IIL) and Cyclic Institutional Incremental Learning (CIIL) \cite{sheller2019multi}, continue along this line of research, demonstrating strong performance in low-bandwidth environments. However, direct model transfer between institutions without aggregation can expose models to privacy risks and tampering by semi-honest participants.

Split Learning (SL) and its variants \cite{gupta2018distributed, yuan2024decentralized} operate by partitioning a deep model into multiple sub-models, which are hosted on different entities. These entities perform forward or backward propagation locally and engage in collaborative training by exchanging intermediate activations or gradients. For example, SplitNN \cite{vepakomma2018split} is a multi-party SL framework that demonstrated its privacy-preserving capabilities in healthcare scenarios. STSL \cite{yuan2023peer} models temporal associations across patients through secure access sequence encoding and a scheduling mechanism, enabling privacy-preserving continual learning. SplitFed \cite{thapa2022splitfed} combines the parallel processing capability of FL with the model-splitting architecture of SL to improve performance.

Despite its advantages in privacy and efficiency, SL still faces several key challenges. Although the amount of data transferred in each communication round is small, a full training iteration typically requires highly frequent interactions, leading to significant communication overhead. Furthermore, most SL methods are not optimized for non-IID data, and their performance is highly sensitive to the model's cut point and the data distribution \cite{thapa2022splitfed}. Finally, while model partitioning can reduce raw data exposure to some extent, research has shown that intermediate representations can still leak private information. Without additional protection mechanisms, SL is vulnerable to information inference attacks \cite{zhu2019deep, pasquini2021unleashing, nguyen2023split}.

\section{PROBLEM DEFINITION}

In this section, we formally define the serial FL problem, characterize the non-IID data distribution challenge, and describe the model gradient leakage issue. 

\subsection{Formalization of the Serial FL Problem}
The basic serial FL problem is to minimize a global objective function:
\begin{equation}
    \min _{\theta \in \mathbb{R}^d}\left\{F(\theta):=\frac{1}{n} \sum_{i=1}^n\left(F_i(\theta):=\mathbb{E}_{\xi \sim \mathcal{D}_i}\left[f_i(\theta ; \xi)\right]\right)\right\}.
\end{equation}

where $\mathbf{\theta} \in \mathbb{R}^d$ denotes the model parameters to be optimized. The local objective for client $i$, $F_i(\mathbf{\theta})$, is the expected loss over its local data distribution $\mathcal{D}_i$. For a finite dataset, this is computed as the empirical risk:
\begin{equation}
    F_i(\mathbf{\theta}) = \frac{1}{|\mathcal{D}_i|} \sum_{\xi \in \mathcal{D}_i} f_i(\mathbf{\theta}; \xi),
\end{equation}
where $f_i$ is the loss function and $\xi$ represents a data sample from $\mathcal{D}_i$.

\subsection{Formalization of the Non-IID Problem}
A key challenge in federated learning is the non-IID nature of data across clients. This implies that the local data distribution $\mathcal{D}_i$ of a client $i$ may differ from that of another client $j$. A common form of this heterogeneity is label distribution skew, where the marginal probability of labels varies across clients. This can be formally expressed as:
\begin{equation}
    P_i(y) \neq P_j(y) \quad \text{for some clients } i \neq j,
\end{equation}
where $P_i(y)$ is the probability of observing label $y$ on client $i$. This discrepancy can severely hinder model performance and convergence.

\subsection{The Model Gradient Leakage Problem}
The gradient leakage attack, also known as a data reconstruction attack, poses a significant security threat in serial FL. The core idea is to reconstruct a client's private training data by exploiting the shared model gradients. 

The attack proceeds as follows: an adversary, who has access to the true gradients $\nabla\mathbf{\theta}$ computed from a victim's private data $(\mathbf{x}, \mathbf{y})$, initializes a pair of synthetic data $(\mathbf{x}', \mathbf{y}')$ with random values. These dummy data are the variables to be optimized. The adversary then performs a forward pass using $(\mathbf{x}', \mathbf{y}')$ to compute a synthetic gradient $\nabla\mathbf{\theta}'$. The objective is to minimize the distance between the true gradient $\nabla\mathbf{\theta}$ and the synthetic gradient $\nabla\mathbf{\theta}'$. This is achieved by iteratively updating $(\mathbf{x}', \mathbf{y}')$ via gradient descent. When the optimization converges, the resulting $(\mathbf{x}'^*, \mathbf{y}'^*)$ will closely approximate the victim's original private data.

The optimization problem is formulated as:
\begin{equation}
\begin{aligned}
\mathbf{x}^{\prime *}, \mathbf{y}^{\prime *} & =\underset{\mathbf{x}^{\prime}, \mathbf{y}^{\prime}}{\arg \min }\left\|\nabla \mathbf{\theta}^{\prime}-\nabla \mathbf{\theta}\right\|^2 \\
& =\underset{\mathbf{x}^{\prime}, \mathbf{y}^{\prime}}{\arg \min }\left\|\frac{\partial \mathcal{L}\left(\mathcal{F}\left(\mathbf{x}^{\prime}, \mathbf{\theta}\right), \mathbf{y}^{\prime}\right)}{\partial \mathbf{\theta}}-\nabla \mathbf{\theta}\right\|^2.
\end{aligned}
\label{eq:grad_leakage} 
\end{equation}

\begin{figure*}
    \centering
    \includegraphics[width=\linewidth]{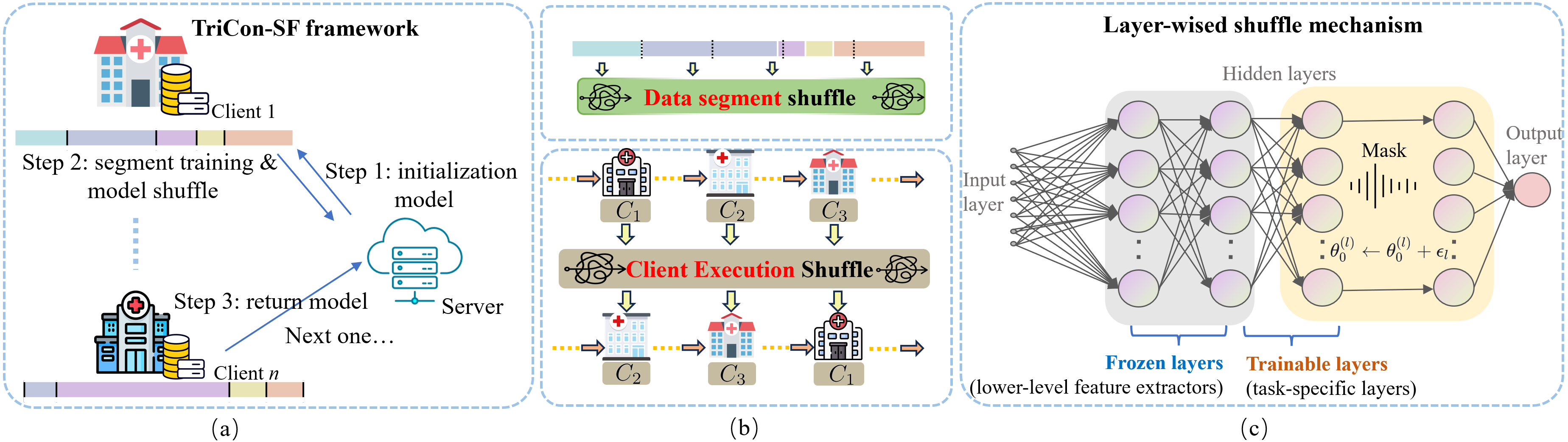}
    \caption{Overview of the TriCon-SF framework. (a) The training pipeline begins with the central server initializing and broadcasting the global model with parameters. Each client performs segment-wise training on locally partitioned data and returns the updated model sequentially. (b) To enhance privacy and robustness, TriCon-SF introduces two shuffling strategies: data segment shuffle, which randomly permutes intra-client data segments, and client execution shuffle, which dynamically alters the client training order across rounds. (c) The Layer-wise Shuffle Mechanism enhances model obfuscation and protects against gradient leakage by randomly masking trainable hidden layers.}
    \label{fig:framework}
\end{figure*}

\section{METHODOLOGY}
In this section, we present the methodology by first describing the system setup and then outlining the overall framework, which is built upon two key mechanisms: the triple-shuffle mechanism and the contribution-aware evaluation mechanism.

\subsection{System setup}
The TriCon-SF framework involves a central server and a set of clients. The central server, denoted by $\mathcal{A}$, is responsible for initializing model parameters and coordinating the training process, while clients perform local model updates.Let there be $n$ clients, represented as $\mathcal{C} = \{C_1, C_2, \ldots, C_n\}$, where each client $C_i$ holds a private dataset $D_i$. The global dataset is thus $\mathcal{D} = \{D_1, D_2, \ldots, D_n\}$, where each $D_i$ is composed of disjoint data segments ${D_i^j}$ that are locally stored and not shared. Table~\ref{tbl:para} summarizes the notations and parameters used throughout our framework.

From the threat model, we assume a semi-honest central server that complies with the protocol while potentially attempting to infer private information from the received data. The clients are either semi-honest or malicious. Some clients may tamper with model parameters, carry out inference attacks, or exploit the training process without making meaningful contributions. The communication between the server and the clients is considered secure, with encrypted channels that prevent interception by external adversaries.

\begin{table}[h!]
\centering
\caption{The list of parameters}
\label{tbl:para}
\setlength{\tabcolsep}{2mm}
\begin{tabular}{cc} 
\toprule
\textbf{Parameter} & \textbf{Definition}\\ 
\hline
$\mathcal{C}$ & Set of all participating clients, $(C_1, C_2, \ldots, C_n)$ \\
$n$ & Total number of clients  \\
$S$ & A subset of clients, $S \subseteq \mathcal{C}$ \\
$D_i$ & Local dataset of client $\mathcal{C}_i$ \\
$l_i = |D_i|$ & Size of local dataset for client $\mathcal{C}_i$ \\
$k$ & Number of segments each dataset is divided into \\
$s_{\text{min}}$ & Minimum size of each data segment \\
$D_{i,j}$ & $j$-th data segment of client $\mathcal{C}_i$ \\
$\mathcal{A}$ & Federated learning server \\
$\mathcal{C}_i$ & The $i$-th client in the federation \\
$\mathcal{M}_0$ & Initial global model \\
$\theta$ & Model parameters of the current model \\
$\theta_0$ & Initial model parameters \\
$\theta^{(l)}$ & Parameters of the $l$-th layer in the model \\
$\theta_{\text{train}}$ & Trainable subset of model parameters \\
$\theta_{\text{freeze}}$ & Frozen subset of model parameters \\
$L$ & Total number of layers in the model \\
$\mathcal{L}_{\text{perturb}}$ & Subset of layers selected for perturbation \\
$\epsilon_l$ & Gaussian noise added to layer $l$: $\epsilon_l \sim \mathcal{N}(0, \sigma^2 I)$ \\
$\pi$ & Random permutation of client indices (client order shuffle) \\
$S_i^{\pi_m}$ & Set of clients preceding $C_i$ in permutation $\pi_m$ \\
$\eta$ & Local learning rate for SGD \\
$\tau$ & Number of local SGD steps per client update \\
$\mathcal{B}$ & Mini-batch sampled from $D_{i,j}$ \\
$f_i(\theta; \xi)$ & Local loss function on data sample $\xi$ for client $i$ \\
$t$ & Current iteration or update index \\
$\phi_i$ & Shapley value: marginal contribution of client $C_i$ \\
$\hat{\phi}_i$ & Estimated Shapley value of client $C_i$ \\
$\phi_{\text{min}}$ & Threshold for minimum acceptable contribution \\
$\text{status}(C_i)$ & Classification result of client $C_i$ \\
$m$ & Number of Monte Carlo permutations used for estimation \\

\bottomrule
\end{tabular}
\end{table} 

\subsection{Framework overview}
The proposed TriCon-SF framework operates in a serial FL setting, tailored for privacy-preserving training over heterogeneous and segmented datasets. The overall workflow is structured into two primary phases: initialization and training, underpinned by a novel triple-shuffle mechanism to ensure data confidentiality and resistance against gradient leakage. The framework overview can be found in Figure \ref{fig:framework}, and the main algorithm can be found in Algorithm \ref{alg:tricon-sf}.

\subsubsection{Initialization phase}
During the initialization phase, the server $\mathcal{A}$ determines two key parameters: the number of data segments $k \in \mathbb{N}$ and the minimum segment size $s_{\text{min}} \in \mathbb{N}$ in such a way that:
\begin{equation}
    \min_{i \in\{1, \ldots, n\}} |D_i| \geq k \cdot s_{\min}.
\end{equation}
This ensures each client has sufficient data for segment-wise training. The parameters $k$ and $s_{\text{min}}$ are then broadcast to all clients. Each client $\mathcal{C}_i$ subsequently partitions its local dataset into $k$ non-overlapping segments: 
\begin{equation}
\begin{array}{r}
D_i = (D_{i, 1}, D_{i, 2}, \ldots, D_{i,k}), \\
\text { where }\left|D_{i, j}\right| \geq s_{\min }, \forall j \in\{1, \ldots, k\}.
\end{array}
\end{equation}
This \textbf{data segmentation shuffle} constitutes the \textbf{first layer} of the proposed triple-shuffle mechanism, promoting intra-client randomness and granular privacy at the data level.

To further enhance privacy and reduce the risk of gradient leakage at initialization, we introduce layer-wise partial perturbation. Let the initial model be $\mathcal{M}_0$ with parameters $\theta_0 = \{\theta_0^{(1)}, \ldots, \theta_0^{(L)}\}$ across $L$ layers. A random subset $\mathcal{L}_{\text{perturb}} \subseteq \{1, \ldots, L\}$ of layers is selected for perturbation. For each $l \in \mathcal{L}_{\text{perturb}}$, additive Gaussian noise $\epsilon_l \sim \mathcal{N}(0, \sigma^2 I)$ is applied:
\begin{equation}
\theta_0^{(l)} \leftarrow \theta_0^{(l)}+\epsilon_l, \quad l \in \mathcal{L}_{\text {perturb }}.
\end{equation}
This \textbf{model layer shuffle} forms the \textbf{second layer} of the triple-shuffle mechanism, obfuscating model initialization across clients. The perturbed model is then distributed to the clients for the training phase.

\subsubsection{Training phase}
At the start of each training round, the server $\mathcal{A}$ samples a random permutation $\pi \in \mathcal{S}_n$ over client indices, defining the order of updates:
\begin{equation}
    C_{\pi(1)} \rightarrow C_{\pi(2)} \rightarrow \cdots \rightarrow C_{\pi(n)}.
\end{equation}
The randomized execution sequence $\pi$ is resampled in each round, ensuring that no fixed client order is established across iterations. This \textbf{client execution shuffle}, resampled per round, constitutes the \textbf{third layer} of the Triple-Shuffle Mechanism, preventing fixed communication paths and improving resistance to adversarial model inversion.

Each client $C_i$ receives the current model parameters $\theta^t$ and selects one of its $k$ data segments $D_{i, j}$ for training. Let $f_i(\theta; \xi)$ denote the local loss function for data sample $\xi \in D_\{i, j\}$. The client performs $\tau$ steps of stochastic gradient descent (SGD) over the segment, updating only $\theta_{\text{train}}$ as follows:
\begin{equation}
\theta_{\text {train }}^{t+1} \leftarrow \theta_{\text {train }}^t-\eta \cdot \frac{1}{|\mathcal{B}|} \sum_{\xi \in \mathcal{B}} \nabla_{\theta_{\text {train }}} f_{\pi(i)}\left(\theta^t ; \xi\right).
\end{equation}

where $\mathcal{B}$ is a mini-batch sampled from $D_{i, j}$, and $\eta$ is the learning rate. In each model, we partition $\theta$ into two subsets:
\begin{itemize}
    \item Trainable layers: $\theta_{\text{train}} \subseteq \theta$ (typically task-specific or higher layers)
    \item Frozen layers: $\theta_{\text{freeze}} = \theta \setminus \theta_{\text{train}}$ (e.g., lower-level feature extractors or backbone).
\end{itemize}

During local updates, only $\theta_{\text{train}}$ is updated locally and $\theta_{\text{freeze}}$ remains unchanged. The updated model $\theta^{t+1}$ is then passed to the next client in the permutation. 

Once local training is complete, client $C_i$ forwards the updated model $\theta^{t+1}$ to the next client $C_{i+1}$ in the permutation sequence. This process continues for $n \cdot k$ total updates (assuming each client updates once per segment), ensuring full coverage of the distributed dataset while maintaining serial training.

\begin{algorithm}
\caption{TriCon-SF: Triple-shuffle serial federated learning framework}
\label{alg:tricon-sf}
\KwIn{Number of clients $n$; each client $\mathcal{C}_i$ with local dataset $D_i$; initial model $\mathcal{M}_0$ with parameters $\boldsymbol{\theta}_0$; learning rate $\eta$; local steps $\tau$; segment count $k$; minimum segment size $s_{\text{min}}$; Gaussian noise scale $\sigma$; total global rounds $T$.}
\KwOut{Trained global model $\boldsymbol{\theta}$}

\textbf{Initialization Phase:} \\
\ForEach{client $\mathcal{C}_i$}{
  Verify $|D_i| \geq k \cdot s_{\text{min}}$ \\
  Partition $D_i$ into $k$ disjoint segments $D_{i,1}, \ldots, D_{i,k}$ with $|D_{i,j}| \geq s_{\text{min}}$ \tcp*{Data segment shuffle}
}

Randomly select $\mathcal{L}_{\text{perturb}} \subseteq \{1, \ldots, L\}$ \tcp*{Layer-wise shuffle}
\ForEach{$l \in \mathcal{L}_{\text{perturb}}$}{
  $\boldsymbol{\theta}_0^{(l)} \leftarrow \boldsymbol{\theta}_0^{(l)} + \epsilon_l$, where $\epsilon_l \sim \mathcal{N}(0, \sigma^2 I)$
}

\vspace{1mm}
\textbf{Training Phase:} \\
Initialize model: $\boldsymbol{\theta} \leftarrow \boldsymbol{\theta}_0$ \tcp*{Model is initialized only ONCE before all rounds}

\For{$t = 1$ to $T$}{
  Server samples a random permutation $\pi_t$ over client indices $\{1, \ldots, n\}$ \tcp*{Client sequence shuffle}

  \For{$i = 1$ to $n$}{
    Client $\mathcal{C}_{\pi_t(i)}$ receives the current model $\boldsymbol{\theta}$ \\
    Randomly select a data segment $D_{\pi_t(i), j}$ 

    Freeze $\boldsymbol{\theta}_{\text{freeze}}$, update only $\boldsymbol{\theta}_{\text{train}}$ \\
    
    \For{$\tau$ local steps}{
      Sample mini-batch $\mathcal{B} \subseteq D_{\pi_t(i), j}$ \\
      $\boldsymbol{\theta}_{\text{train}} \leftarrow \boldsymbol{\theta}_{\text{train}} - \eta \cdot \frac{1}{|\mathcal{B}|} \sum\limits_{\xi \in \mathcal{B}} \nabla_{\boldsymbol{\theta}_{\text{train}}} f_{\pi_t(i)}(\boldsymbol{\theta}; \xi)$
    }
    
    Pass the updated model $\boldsymbol{\theta}$ to the next client in the sequence $\pi_t$
  }

}
\Return Final model $\boldsymbol{\theta}$
\end{algorithm}

\subsection{Contribution-aware evaluation mechanism}
To ensure fairness and integrity in collaborative training, we integrate a contribution-aware evaluation mechanism that quantifies each client's impact on the global model performance. This mechanism is built upon the Shapley value framework, a cooperative game-theoretic approach that assigns a fair value of contribution to each participant.
\subsubsection{Shapley value estimation}
Let $\mathcal{V}: 2^{\mathcal{C}} \rightarrow \mathbb{R}$ be a utility function that measures model performance (e.g., accuracy, loss reduction) obtained by aggregating updates from a given subset of clients $S \subseteq \mathcal{C}$. The Shapley value $\phi_i$ for client $C_i$ is defined as:
\begin{equation}
\phi_i=\sum_{S \subseteq \mathcal{C} \backslash\left\{C_i\right\}} \frac{|S|!(n-|S|-1)!}{n!}\left[\mathcal{V}\left(S \cup\left\{C_i\right\}\right)-\mathcal{V}(S)\right].
\end{equation}
This value quantifies the marginal contribution of $C_i$ to every possible coalition $S$. Intuitively, it evaluates how much model utility increases when $C_i$ is included in a coalition.

Due to the exponential complexity of computing exact Shapley values, we adopt an approximate estimation using Monte Carlo sampling: for $m$ random permutations $\pi_1, \ldots, \pi_m$ of the clients, we estimate $\phi_i$ as:
\begin{equation}
\hat{\phi}_i=\frac{1}{m} \sum_{m=1}^m\left[\mathcal{V}\left(S_i^{\pi_m} \cup\left\{C_i\right\}\right)-\mathcal{V}\left(S_i^{\pi_m}\right)\right].
\end{equation}

where $S_i^{\pi_m}$ denotes the set of clients preceding $C_i$ in the $m$-th permutation $\pi_m$.

\subsubsection{Free-rider and malicious client detection}
To filter out non-contributing or adversarial behavior, we define a contribution threshold $\phi_{\text{min}}$. After model training, each client's Shapley score $\hat{\phi}_i$ is compared against this threshold:
\begin{equation}
\operatorname{status}\left(C_i\right)= \begin{cases}\text { Honest }, & \text { if } \hat{\phi}_i \geq \phi_{\min } \\ \text { Free-rider or Malicious, } & \text { otherwise }\end{cases}
\end{equation}

Clients whose contributions fall below the threshold are flagged for exclusion or further inspection. This step helps enforce accountability and robust aggregation, preventing free-riders from benefiting without contributing and limiting the impact of malicious manipulation.

\section{THEORETICAL ANALYSIS}
In this section, we present the computation and communication models of the TriCon-SF framework to provide a clearer understanding of its training pipeline. In addition, we include a theoretical analysis of its security guarantees.

\subsection{Communication Model for TriCon-SF}

In the serial architecture of TriCon-SF, communication occurs sequentially as the model is passed from one client to the next, forming a communication chain. The primary communication cost in this process is determined by the size of the model parameters being transmitted.

Let $M_{\text{size}}$ be the size of the model parameters $\mathbf{\theta}$ in bits. The transmission rate for client $i$, denoted as $r_i$, can be modeled by the Shannon-Hartley theorem:
\begin{equation}
    r_i = B \log_2 \left(1 + \frac{\rho_i h_i}{N_0}\right),
\end{equation}
where $B$ is the channel bandwidth, $\rho_i$ is the transmission power of client $i$, $h_i$ is its channel gain, and $N_0$ is the background noise power spectral density.

The time required for a single transmission of the model from a client (e.g., client $i$) to the next in the sequence is:
\begin{equation}
    T_i^{\text{com}} = \frac{M_{\text{size}}}{r_i}.
\end{equation}

A full global round in TriCon-SF involves the model circulating through all $n$ participating clients. This requires a total of $n$ sequential transmissions. Therefore, the total communication time for one complete round, $T_{\text{round}}^{\text{com}}$, is the sum of the individual transmission times along the communication path defined by the permutation $\pi$:
\begin{equation}
    T_{\text{round}}^{\text{com}} = \sum_{i=1}^{n} T_{\pi(i)}^{\text{com}} = \sum_{i=1}^{n} \frac{M_{\text{size}}}{r_{\pi(i)}}.
\end{equation}

\subsection{Security analysis}
Gradient leakage attacks aim to reconstruct private training data by analyzing gradients exchanged during federated learning. To counter this, TriCon-SF integrates a layer shuffle-and-perturb mechanism combining layer-wise shuffling, $L_2$ norm clipping, and Gaussian noise injection, to ensure privacy even under gradient exposure.

\begin{theorem}[Privacy protection via layer shuffle-and-perturb mechanism]\label{thm:shuffle_privacy}
Let $\mathbf{z}$ denote a local feature representation transmitted from a client during training. Under the layer shuffle mechanism, the adversary cannot reconstruct the original input $\mathbf{x}$ with high confidence, even with full access to model updates or gradients.
\end{theorem}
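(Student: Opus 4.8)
The plan is to first make the statement precise: fix a target fidelity $\delta>0$ and call the adversary \emph{successful} if, running the gradient-matching attack of \eqref{eq:grad_leakage} against the transmitted updates, it outputs $\mathbf{x}^{\prime *}$ with $\|\mathbf{x}^{\prime *}-\mathbf{x}\|\le\delta$. The goal is then to upper bound $\Pr[\text{success}]$, where the probability is over the random subset $\mathcal{L}_{\text{perturb}}$, the initialization noises $\{\epsilon_l\}$, and the per-step clipped-Gaussian noise injected into the transmitted gradients, and to show this bound tends to $0$ as the noise scale $\sigma$ grows and as $|\mathcal{L}_{\text{perturb}}|$ increases. I would split the argument into a \emph{combinatorial} part (layer shuffle), a \emph{differential-privacy} part (clipping plus Gaussian noise), and a short \emph{composition} step that multiplies the two bounds because these sources of randomness are independent.

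For the combinatorial part: since only $\theta_{\text{train}}$ receives gradient signal while $\theta_{\text{freeze}}$ is frozen, the forward map the adversary must invert factors as $\mathcal{F}(\cdot,\theta)=g_{\theta_{\text{train}}}\circ h_{\theta_{\text{freeze}}}$, so even a perfect gradient match reveals at best the feature representation $\mathbf{z}=h_{\theta_{\text{freeze}}}(\mathbf{x})$, and the perturbation $\theta_0^{(l)}\leftarrow\theta_0^{(l)}+\epsilon_l$ over the secret index set $\mathcal{L}_{\text{perturb}}$ makes $h_{\theta_{\text{freeze}}}$ an unknown member of a family $\mathcal{H}$ of effective size $\binom{L}{|\mathcal{L}_{\text{perturb}}|}$ crossed with a continuum of noise realizations. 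I would show that recovering $\mathbf{x}$ forces the adversary to identify $h$ up to its input-fiber symmetries, so that $\Pr[\text{success}\mid\text{step noise}] \le \max_{\mathbf{x}'}\Pr_{\mathcal{L},\epsilon}\!\big[\|h_{\theta_{\text{freeze}}}(\mathbf{x}')-\mathbf{z}\|\le\delta'\big]$, which is small under an explicit genericity condition stating that distinct perturbation patterns do not map a common input to the same $\mathbf{z}$.

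For the differential-privacy part: conditioned on a fixed hypothesis $h\in\mathcal{H}$, the transmitted quantity is a clipped, Gaussian-noised gradient, i.e.\ a Gaussian mechanism whose sensitivity is the $L_2$ clip radius $c$; the standard bound makes this $(\varepsilon,\delta_0)$-DP per local step with $\varepsilon=c\sqrt{2\ln(1.25/\delta_0)}/\sigma$, and advanced composition over the $\tau$ local steps yields an overall $(\varepsilon',\delta_0')$-DP guarantee that is preserved under post-processing along the serial chain. Invoking a DP-to-reconstruction bound (a Fano- or Le Cam-type inequality, or the recent results establishing that $(\varepsilon,\delta_0)$-DP implies $\Pr[\|\mathbf{x}^{\prime *}-\mathbf{x}\|\le\delta]\le\kappa(\varepsilon',\delta_0',\delta)$ with $\kappa\to 0$ as $\varepsilon'\to 0$), I obtain a per-hypothesis reconstruction-hardness bound, and multiplying by the combinatorial bound gives $\Pr[\text{success}] \le |\mathcal{H}|^{-1}\kappa(\varepsilon',\delta_0',\delta)+o(1)$, i.e.\ the claimed failure of high-confidence reconstruction.

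The main obstacle I anticipate is the rigor of the differential-privacy step. First, the attack in \eqref{eq:grad_leakage} does not consume one isolated DP query: the adversary observes the \emph{entire} noised parameter trajectory passed down the chain, so I must argue carefully that composition plus post-processing still leave a non-trivial budget rather than resetting it. Second, the initialization perturbation is added \emph{once} rather than freshly at each step, so it does not by itself confer differential privacy — it supplies only a fixed-but-secret reparametrization, which is precisely why I route it through the combinatorial argument and let the DP guarantee rest solely on the per-step clipped-Gaussian mechanism. A further subtlety, since the theorem quantifies over ``full access to model updates or gradients,'' is to pin down exactly which quantities are observable (the successive $\theta^{t}$, hence the increments of $\theta_{\text{train}}$) versus which remain hidden ($\mathcal{L}_{\text{perturb}}$ and the original un-perturbed $\theta_0^{(l)}$), because the combinatorial bound collapses if the pre-perturbation backbone is ever exposed.
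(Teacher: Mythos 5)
The paper's own ``proof'' is qualitative: it enumerates four transformations --- a client-specific permutation $\pi_i$ applied to the coordinates of the intermediate feature vector $\mathbf{z}_i$, projection of $\tilde{\mathbf{z}}_i$ onto an $L_2$ ball of radius $c$, addition of $\mathcal{N}(0,\sigma^2\mathbf{I})$ noise, and gradient computation on the perturbed features --- and then asserts that the resulting inversion problem is ``underdetermined and statistically unreliable,'' deferring real validation to the experiments. It never defines a success event, never derives the $(\varepsilon,\delta)$ parameters it invokes, and never states a reconstruction bound. Your proposal is therefore a genuinely different and more ambitious route: you formalize the adversary's success probability for the attack in \eqref{eq:grad_leakage}, separate a combinatorial obfuscation term from a differential-privacy term, and close with a DP-to-reconstruction (Fano/Le Cam) inequality plus composition over the $\tau$ local steps and post-processing along the serial chain. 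Your observation that the one-shot initialization perturbation supplies only a fixed-but-secret reparametrization and cannot by itself confer DP is correct and is precisely the point the paper elides; routing that randomness through a combinatorial argument rather than a privacy accountant is the right instinct.

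Two concrete gaps remain. First, you and the paper are analyzing different mechanisms: you locate the ``shuffle'' in the secret perturbed-layer subset $\mathcal{L}_{\text{perturb}}$ chosen once at initialization, whereas the paper's proof shuffles the \emph{coordinates of the feature vector} $\mathbf{z}_i$ with a fresh client-specific permutation at each transmission, and applies clipping and Gaussian noise to $\mathbf{z}_i$ rather than to the gradient (the gradient inherits the randomness only by post-processing). Your DP step, which treats the released quantity as a clipped-Gaussian gradient with sensitivity $c$, would need to be restated for the feature-level mechanism actually described, and the resulting guarantee is then with respect to neighboring feature vectors, not neighboring inputs --- an extra data-processing step is needed to lift it to $\mathbf{x}$. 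Second, the final combination $\Pr[\text{success}]\le |\mathcal{H}|^{-1}\kappa(\varepsilon',\delta_0',\delta)+o(1)$ is not justified: the adversary does not need to identify $h$ exactly to land within $\delta$ of $\mathbf{x}$, so multiplying a hypothesis-counting factor by a per-hypothesis reconstruction bound presupposes both a uniform prior over $\mathcal{H}$ and an exact-identification requirement that your genericity condition does not deliver; the honest conclusion from your two lemmas is $\min$ of the two bounds, not their product. Neither gap is fatal to the strategy, but as written the last step overstates what the pieces prove.
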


\begin{proof}
TriCon-SF applies the following transformations to each intermediate representation $\mathbf{z}_i$ computed by client $C_i$:
\begin{itemize}
    \item \textbf{Layer-wise shuffling}: A client-specific random permutation $\pi_i$ is applied to the intermediate feature vector:
    \begin{equation}
\tilde{\mathbf{z}}_i=\pi_i\left(\mathbf{z}_i\right)
    \end{equation}
    where $\pi_i$ is unknown to the server. This operation breaks the alignment of semantic features across training steps, making the inverse mapping from gradients to input non-trivial and non-deterministic.
    \item \textbf{$L_2$-norm clipping}: The permuted vector $\tilde{\mathbf{z}}_i$ is then projected onto an $L_2$-norm ball with radius $c$:
    \begin{equation}
        \mathbf{z}_i^{\mathrm{clip}}=\tilde{\mathbf{z}}_i \cdot \min \left(1, \frac{c}{\left\|\tilde{\mathbf{z}}_i\right\|_2}\right)
    \end{equation}
    where $\mathrm{c}=1.0$ in our experiments. This limits the contribution of each data point to the gradient, effectively bounding its sensitivity:
    \begin{equation}
        \Delta f=\max _{\mathbf{x}, \mathbf{x}^{\prime}}\left\|f(\mathbf{x})-f\left(\mathbf{x}^{\prime}\right)\right\|_2 \leq c
    \end{equation}

    \item \textbf{Gaussian noise injection}: To achieve differential privacy, we add noise sampled from a zero-mean Gaussian distribution with standard deviation $\sigma$:
    \begin{equation}
    \hat{\mathbf{z}}_i=\mathbf{z}_i^{\mathrm{clip}}+\mathcal{N}\left(0, \sigma^2 \mathbf{I}\right)
    \end{equation}
with $\sigma = 1.0$ as the noise scale. This perturbation ensures that the probability of observing a specific gradient is nearly the same for neighboring inputs, satisfying $(\varepsilon, \delta)$-differential privacy.
    \item \textbf{Gradient update}: The perturbed features $\hat{\mathbf{z}}_i$ are then used for local gradient computation:
    \begin{equation}
        \nabla_{\hat{\mathbf{z}}} \mathcal{L}=\frac{\partial \mathcal{L}\left(\hat{\mathbf{z}}_i, y_i\right)}{\partial \hat{\mathbf{z}}_i} .
    \end{equation}
    Even if an adversary attempts to perform inversion using the observed gradient difference:
\begin{equation}
    \delta \mathbf{\theta}_k = \mathbf{\theta}_{k+1} - \mathbf{\theta}_k = \eta \cdot \frac{\partial \mathcal{L}}{\partial \mathbf{\theta}}.
\end{equation}
    The randomness induced by shuffling and noise addition makes the inversion problem underdetermined and statistically unreliable.
\end{itemize}
\end{proof}
Our experimental analysis further validates this proof.

\section{EXPERIMENT EVALUATION}
The experimental evaluation section includes the experimental setup, accuracy assessment, computational evaluation, and security analysis.
\subsection{Experimental settings}
\subsubsection{Datasets}
To evaluate the generalization capability of the proposed TriCon-SF framework across heterogeneous data distributions, particularly in the healthcare domain, we conducted experiments on five publicly available cancer diagnosis datasets spanning both tabular and image modalities:

\begin{itemize}
\item \textbf{CUMIDA collection datasets (tabular)}: This collection includes three datasets. The Leukemia\_GSE28497 dataset~\cite{alabdulqader2024improving} supports binary classification of leukemia subtypes and consists of approximately 12,600 gene expression features per sample. The Brain\_GSE50161 dataset~\cite{zhang2020bioinformatics} is designed for multiclass brain tumor classification based on microarray gene expression profiles, enabling the identification of distinct tumor subtypes. The Breast\_GSE45827 dataset~\cite{Feltes2019} includes around 22,000 probe features per sample and is used for predicting breast cancer subtypes from gene expression data.

\item \textbf{PCAWG (Pan-Cancer Analysis of Whole Genomes) (sparse vector)} \cite{jiao2020deep}: A whole-genome sequencing dataset consisting of 2,658 tumor samples, used for multiclass classification across 24 primary cancer types. Input features include somatic mutation distributions and mutation-type profiles represented in sparse vector form.

\item \textbf{HAM10000 (image)} \cite{tschandl2018ham10000}: A dermatoscopic image dataset with 10,015 high-resolution skin lesion samples across seven diagnostic categories. It presents a significant class imbalance, with over 60\% of samples labeled as melanocytic nevi (nv), posing challenges for unbiased model training.
\end{itemize}

\subsubsection{Baselines}
To evaluate the effectiveness of the proposed TriCon-SF framework, we compare its performance against a suite of representative distributed learning methods. These baselines include several widely adopted parallel FL algorithms that are specifically designed to address challenges associated with non-IID data distributions:

\begin{itemize}
\item \textbf{FedAvg}~\cite{mcmahan2017communication}: A foundational FL algorithm that performs client-side local training followed by global aggregation via parameter averaging.

\item \textbf{FedProx}~\cite{li2020federated}: An enhancement of FedAvg that incorporates a proximal term to mitigate client drift by constraining local updates. The proximal penalty coefficient $\mu$ is set to $0.01$, following standard practice.

\item \textbf{SCAFFOLD}~\cite{karimireddy2020scaffold}: A control-variates-based approach that reduces variance in local updates by correcting client drift using server- and client-side control variates.

\item \textbf{FedDisco}~\cite{ye2023feddisco}: A decoupled FL framework that introduces dual control mechanisms to improve convergence under both system and statistical heterogeneity. We adopt the recommended hyperparameter settings: $a = 0.5$ and $b = 0.1$.

\end{itemize}

In addition, we include two serial FL variants based on the SFL paradigm~\cite{thapa2022splitfed}:

\begin{itemize}
    \item \textbf{SFLV1:} A hybrid approach where the client-side model is trained locally, and server-side model updates are aggregated in parallel across clients.
    \item \textbf{SFLV2}: A sequential variant of SplitFed in which the server-side model is updated client-by-client, introducing a serial update flow that aligns more closely with TriCon-SF.
\end{itemize}

\begin{table*}[!t]
\centering
\caption{Test accuracy (\%) under different client numbers and non-IID degree on CUMIDA datasets.}
\footnotesize
\setlength{\tabcolsep}{1pt}
\renewcommand{\arraystretch}{0.85}
\label{tab:test_acc_cumida_final_transposed1}
\begin{tabularx}{0.9\textwidth}{>{\centering\arraybackslash}X|*{3}{>{\centering\arraybackslash}X}|*{3}{>{\centering\arraybackslash}X}|*{3}{>{\centering\arraybackslash}X}@{}}
\toprule
\textbf{Clients} & \multicolumn{3}{c|}{\textbf{2}} & \multicolumn{3}{c|}{\textbf{5}} & \multicolumn{3}{c}{\textbf{10}} \\
\midrule
$\beta$ & 0.5 & 1 & 10 & 0.5 & 1 & 10 & 0.5 & 1 & 10 \\
\midrule
\multicolumn{10}{c}{\cellcolor{lightgray}\textbf{\textit{Leukemia}} - Centralized ACC $=88.5\pm1.6$} \\
\midrule
\cellcolor{lightgray}FedAvg        & 78.2$\pm$4.3 & 81.6$\pm$1.6 & 81.6$\pm$1.6 & 79.3$\pm$2.8 & 80.5$\pm$3.3 & 81.6$\pm$1.6 & 73.6$\pm$4.3 & 74.7$\pm$4.3 & 80.5$\pm$3.3 \\
\cellcolor{lightgray}FedProx       & 77.0$\pm$5.9 & 82.8$\pm$5.6 & 83.9$\pm$3.3 & 79.3$\pm$4.9 & 79.3$\pm$4.9 & 81.6$\pm$1.6 & 74.7$\pm$3.3 & 78.2$\pm$3.3 & 79.3$\pm$2.8 \\
\cellcolor{lightgray}Scaffold      & 75.9$\pm$2.8 & 82.8$\pm$2.8 & 83.9$\pm$1.6 & 77.0$\pm$4.3 & 75.9$\pm$5.6 & 79.3$\pm$0.0 & 74.7$\pm$5.9 & 77.0$\pm$1.6 & 81.6$\pm$1.6 \\
\cellcolor{lightgray}FedDisco      & \textcolor{blue}{\textbf{81.6$\pm$3.3}} & \textcolor{blue}{\textbf{83.9$\pm$1.6}} & 82.8$\pm$2.8 & 74.7$\pm$1.6 & 79.3$\pm$4.9 & 82.8$\pm$5.6 & 72.4$\pm$5.6 & 78.2$\pm$7.1 & 80.5$\pm$1.6 \\
\cellcolor{lightgray}SFLV1         & 77.3$\pm$1.8 & 74.0$\pm$1.0 & 82.2$\pm$0.9 & \textcolor{blue}{\textbf{85.5$\pm$0.0}} & 73.1$\pm$1.7 & 76.1$\pm$0.0 & 58.2$\pm$4.9 & 58.2$\pm$8.4 & 68.0$\pm$9.0 \\
\cellcolor{lightgray}SFLV2         & 78.3$\pm$1.9 & 74.5$\pm$0.5 & 83.2$\pm$1.3 & 83.6$\pm$0.0 & 74.6$\pm$0.7 & 75.7$\pm$1.0 & 64.4$\pm$4.7 & 66.0$\pm$3.1 & 77.4$\pm$4.0 \\
\cellcolor{lightgray}\textbf{TriCon-SF} & 80.5$\pm$1.6 & 80.5$\pm$1.6 & \textcolor{blue}{\textbf{85.1$\pm$1.6}} & 80.5$\pm$1.6 & \textcolor{blue}{\textbf{81.6$\pm$1.6}} & \textcolor{blue}{\textbf{85.1$\pm$3.3}} & \textcolor{blue}{\textbf{81.6$\pm$1.6}} & \textcolor{blue}{\textbf{82.8$\pm$2.8}} & \textcolor{blue}{\underline{\textbf{87.4$\pm$1.6}}} \\
\midrule
\multicolumn{10}{c}{\cellcolor{lightgray}\textbf{\textit{Breast}} - Centralized ACC $=95.8\pm2.9$} \\
\midrule
\cellcolor{lightgray}FedAvg        & 85.4$\pm$5.9 & 85.4$\pm$2.9 & 89.6$\pm$10.6 & 70.8$\pm$7.8 & 75.0$\pm$5.1 & 87.5$\pm$5.1 & 66.7$\pm$12.8 & 70.8$\pm$7.8 & 79.2$\pm$2.9 \\
\cellcolor{lightgray}FedProx       & 83.3$\pm$2.9 & \textcolor{blue}{\textbf{89.6$\pm$2.9}} & 87.5$\pm$5.1 & 75.0$\pm$10.2 & 79.2$\pm$11.8 & 85.4$\pm$5.9 & 72.9$\pm$16.4 & 75.0$\pm$10.2 & 83.3$\pm$10.6 \\
\cellcolor{lightgray}Scaffold      & 83.3$\pm$15.6 & 83.3$\pm$7.8 & 89.6$\pm$7.8 & 81.2$\pm$5.1 & 83.3$\pm$2.9 & 85.4$\pm$2.9 & 72.9$\pm$5.9 & 81.2$\pm$5.1 & 85.4$\pm$7.8 \\
\cellcolor{lightgray}FedDisco      & 79.2$\pm$15.6 & 87.5$\pm$8.8 & \textcolor{blue}{\underline{\textbf{93.8$\pm$5.1}}} & 79.2$\pm$20.6 & 83.3$\pm$7.8 & 85.4$\pm$5.9 & 70.8$\pm$15.6 & 72.9$\pm$11.8 & 83.3$\pm$2.9 \\
\cellcolor{lightgray}SFLV1         & 92.0$\pm$2.0 & 80.2$\pm$1.8 & 72.1$\pm$5.6 & 69.8$\pm$6.2 & 70.0$\pm$5.1 & 61.6$\pm$8.6 & 56.4$\pm$0.0 & 63.9$\pm$3.7 & 65.5$\pm$3.7 \\
\cellcolor{lightgray}SFLV2         & \textcolor{blue}{\textbf{92.3$\pm$2.1}} & 81.0$\pm$3.1 & 75.8$\pm$2.8 & 66.1$\pm$6.2 & 73.0$\pm$4.1 & 62.2$\pm$8.7 & 56.4$\pm$0.0 & 65.9$\pm$7.8 & 69.0$\pm$4.4 \\
\cellcolor{lightgray}\textbf{TriCon-SF} & 83.3$\pm$5.9 & 83.3$\pm$5.9 & 85.4$\pm$5.9 & \textcolor{blue}{\textbf{87.5$\pm$5.1}} & \textcolor{blue}{\textbf{87.5$\pm$0.0}} & \textcolor{blue}{\underline{\textbf{93.8$\pm$5.1}}} & \textcolor{blue}{\textbf{84.4$\pm$3.1}} & \textcolor{blue}{\textbf{89.6$\pm$2.9}} & \textcolor{blue}{\textbf{91.7$\pm$2.9}} \\
\midrule
\multicolumn{10}{c}{\cellcolor{lightgray}\textbf{\textit{Brain}} - Centralized ACC $=92.3\pm0.0$} \\
\midrule
\cellcolor{lightgray}FedAvg        & 79.5$\pm$7.3 & 79.5$\pm$3.6 & 84.6$\pm$0.0 & 76.9$\pm$6.3 & 82.1$\pm$9.6 & 87.2$\pm$3.6 & 64.1$\pm$7.3 & 76.9$\pm$6.3 & 82.1$\pm$3.6 \\
\cellcolor{lightgray}FedProx       & 71.8$\pm$3.6 & 84.6$\pm$0.0 & 87.2$\pm$3.6 & 71.8$\pm$18.1 & 79.5$\pm$7.3 & 82.1$\pm$9.6 & 64.1$\pm$9.6 & 71.8$\pm$18.1 & 76.9$\pm$12.6 \\
\cellcolor{lightgray}Scaffold      & 82.1$\pm$7.3 & 87.2$\pm$3.6 & 84.6$\pm$6.3 & 71.8$\pm$7.3 & 76.9$\pm$10.9 & 82.1$\pm$7.3 & 71.8$\pm$14.5 & 74.4$\pm$9.6 & 84.6$\pm$10.9 \\
\cellcolor{lightgray}FedDisco      & 76.9$\pm$10.9 & 79.5$\pm$3.6 & 84.6$\pm$6.3 & 79.5$\pm$3.6 & 84.6$\pm$10.9 & 82.1$\pm$7.3 & 79.5$\pm$9.6 & 84.6$\pm$6.3 & 84.6$\pm$6.3 \\
\cellcolor{lightgray}SFLV1         & 65.9$\pm$0.0 & 78.2$\pm$2.5 & 86.4$\pm$2.8 & 79.1$\pm$0.0 & 83.2$\pm$3.6 & 86.9$\pm$0.0 & 64.7$\pm$3.5 & 63.3$\pm$0.0 & 76.7$\pm$5.6 \\
\cellcolor{lightgray}SFLV2         & 65.9$\pm$0.0 & 81.5$\pm$4.2 & 85.6$\pm$2.7 & 79.1$\pm$0.0 & 85.3$\pm$0.0 & 86.9$\pm$0.0 & 66.3$\pm$3.9 & 63.3$\pm$0.0 & 80.4$\pm$0.0 \\
\cellcolor{lightgray}\textbf{TriCon-SF} & \textcolor{blue}{\textbf{87.2$\pm$3.6}} & \textcolor{blue}{\textbf{89.7$\pm$3.6}} & \textcolor{blue}{\underline{\textbf{92.3$\pm$0.0}}} & \textcolor{blue}{\textbf{80.8$\pm$3.8}} & \textcolor{blue}{\textbf{89.7$\pm$3.6}} & \textcolor{blue}{\textbf{89.7$\pm$3.6}} & \textcolor{blue}{\textbf{89.7$\pm$3.6}} & \textcolor{blue}{\textbf{88.5$\pm$3.8}} & \textcolor{blue}{\underline{\textbf{92.3$\pm$0.0}}} \\
\midrule
\addlinespace[2pt]
\multicolumn{10}{p{\dimexpr0.9\textwidth-2\tabcolsep\relax}}{\textbf{Note:} Higher accuracy is better. The best accuracy in each column is marked in \textbf{\textcolor{blue}{blue}}. The overall best result for each dataset is \underline{underlined}.} \\
\bottomrule
\end{tabularx}
\end{table*}

\subsubsection{Experimental setting}
The training, validation, and test datasets for each client were split in a stratified manner according to the local class distribution, with a ratio of 8:1:1. We employed a multilayer perceptron model for the CUMIDA and PCAWG datasets, while a ResNet18~\cite{he2016deep} model was applied for the HAM10000 dataset. The Adam optimizer~\cite{kingma2014adam} was adopted for all training processes. The experiments were executed on a server equipped with 2X Intel(R) Xeon(R) Platinum 8358 CPUs @ 2.60GHz and 1 NVIDIA L40S GPU. A detailed parameter setting can be found in Table \ref{tab:training_params}.

\begin{table}[]
    \centering
        \caption{Training parameters including the size of hidden layers, dropout rates, weight decay, batch sizes, and learning rates for each dataset used in our experiments.}
    \label{tab:training_params}
    \scalebox{0.9}{
    \begin{tabular}{lcccccc}
\toprule
\textbf{} & \textbf{Hidden} &  & \textbf{Weight} & \textbf{Batch} & \textbf{Learning} \\ 
\textbf{} & \textbf{layers} & \textbf{Dropout} & \textbf{decay} & \textbf{size} & \textbf{rate}\\ \midrule
Breast & 10 & 0.1 & 1e-1 & \multirow{3}{*}{32} & \multirow{3}{*}{2e-5} \\
Brain & 64 & 0.5 & 1e-2 & & \\
Leukemia & 64 & 0.2 & 5e-4 & & \\ \midrule
PCAWG & {[}2048,1024,512{]} & 0.3 & 1e-3 & 32 & \multirow{2}{*}{1e-4} \\
HAM10000 & - & - & 0 & 256 & \\ \bottomrule
\end{tabular}}
\end{table}

\subsubsection{Federated setting}
To simulate potential heterogeneous data distribution among clients in FL, we distributed data to clients according to a Dirichlet distribution $Dir_\beta$ \cite{li2021model}, where a smaller $\beta$ value leads to a more dispersed distribution of data. We varied $\beta$ in $[0.5, 1, 10]$. Given the dataset volumes, the number of clients was set to $[2, 5, 10]$ for the CUMIDA datasets and $[5, 10, 20]$ for the PCAWG and HAM10000 datasets. The number of communication rounds in FL was set to 60, 100, and 200 rounds for the CUMIDA, PCAWG, and HAM10000 datasets, respectively, with one local epoch per communication round for all algorithms. 

\begin{table*}[htb]
\centering
\caption{Test accuracy (\%) under different client numbers and non-IID degree on PCAWG and HAM10000 datasets.}
\label{tab:test_acc_cumida_final_transposed2}
\footnotesize
\setlength{\tabcolsep}{1pt}
\renewcommand{\arraystretch}{0.85}
\begin{tabularx}{0.9\textwidth}{>{\centering\arraybackslash}X|*{3}{>{\centering\arraybackslash}X}|*{3}{>{\centering\arraybackslash}X}|*{3}{>{\centering\arraybackslash}X}@{}}
\toprule
\textbf{Clients} & \multicolumn{3}{c|}{\textbf{5}} & \multicolumn{3}{c|}{\textbf{10}} & \multicolumn{3}{c}{\textbf{20}} \\
\midrule
$\beta$ & 0.5 & 1 & 10 & 0.5 & 1 & 10 & 0.5 & 1 & 10 \\
\midrule

\multicolumn{10}{c}{\cellcolor{lightgray}\textbf{\textit{PCAWG}} - Centralized training ACC $=88.5\pm0.3$} \\
\midrule
\cellcolor{lightgray}FedAvg        & 82.7$\pm$1.3 & 83.7$\pm$0.5 & 84.9$\pm$0.7 & 76.5$\pm$0.5 & 75.9$\pm$0.5 & 81.2$\pm$0.9 & 68.5$\pm$0.0 & 70.8$\pm$1.7 & 71.9$\pm$0.3 \\
\cellcolor{lightgray}FedProx       & 83.2$\pm$0.9 & 85.4$\pm$0.8 & 86.3$\pm$1.2 & 76.3$\pm$1.0 & 76.0$\pm$2.1 & 82.1$\pm$1.5 & 68.7$\pm$0.5 & 71.2$\pm$1.7 & 72.9$\pm$0.5 \\
\cellcolor{lightgray}Scaffold      & 83.7$\pm$1.0 & 84.6$\pm$0.8 & \textcolor{blue}{\underline{\textbf{86.7$\pm$1.1}}} & 75.3$\pm$0.4 & 76.9$\pm$1.1 & 82.1$\pm$0.7 & 68.3$\pm$0.4 & 70.6$\pm$1.0 & 71.3$\pm$1.1 \\
\cellcolor{lightgray}FedDisco      & 83.1$\pm$0.8 & \textcolor{blue}{\textbf{85.8$\pm$0.8}} & 86.5$\pm$1.4 & 78.8$\pm$1.1 & 81.7$\pm$0.5 & 80.9$\pm$0.9 & 69.4$\pm$1.4 & 71.4$\pm$0.7 & 72.3$\pm$0.9 \\
\cellcolor{lightgray}SFLV1         & 49.3$\pm$3.2 & 82.3$\pm$3.0 & 81.2$\pm$1.3 & 72.2$\pm$2.8 & 73.3$\pm$1.4 & 74.4$\pm$0.7 & 66.2$\pm$1.5 & 67.3$\pm$1.0 & 68.7$\pm$0.6 \\
\cellcolor{lightgray}SFLV2         & 81.7$\pm$1.5 & 81.6$\pm$2.9 & 81.0$\pm$1.4 & 73.8$\pm$0.6 & 74.0$\pm$1.1 & 75.1$\pm$1.2 & 67.9$\pm$1.1 & 67.6$\pm$1.2 & 68.5$\pm$1.0 \\
\cellcolor{lightgray}\textbf{TriCon-SF} & \textcolor{blue}{\textbf{85.5$\pm$1.1}} & 85.3$\pm$0.8 & 86.2$\pm$0.8 & \textcolor{blue}{\textbf{83.5$\pm$1.4}} & \textcolor{blue}{\textbf{82.8$\pm$2.2}} & \textcolor{blue}{\textbf{84.7$\pm$1.0}} & \textcolor{blue}{\textbf{80.9$\pm$1.1}} & \textcolor{blue}{\textbf{83.5$\pm$1.9}} & \textcolor{blue}{\textbf{84.5$\pm$1.0}} \\
\midrule

\multicolumn{10}{c}{\cellcolor{lightgray}\textbf{\textit{HAM10000}} - Centralized training ACC $=79.2\pm0.8$} \\
\midrule
\cellcolor{lightgray}FedAvg        & 75.0$\pm$1.0 & 74.5$\pm$1.0 & 76.0$\pm$0.5 & 72.5$\pm$0.3 & 73.3$\pm$0.3 & 76.3$\pm$0.5 & 71.4$\pm$0.5 & 71.1$\pm$0.4 & 76.9$\pm$0.6 \\
\cellcolor{lightgray}FedProx       & 76.6$\pm$0.7 & 77.4$\pm$0.5 & 77.9$\pm$1.4 & 74.4$\pm$0.2 & 74.3$\pm$0.6 & 78.1$\pm$0.4 & 70.1$\pm$0.3 & 74.4$\pm$0.1 & 77.5$\pm$0.6 \\
\cellcolor{lightgray}Scaffold      & 75.6$\pm$0.3 & 77.1$\pm$0.4 & 78.1$\pm$0.4 & 73.6$\pm$0.6 & 74.0$\pm$0.3 & \textcolor{blue}{\textbf{78.4$\pm$0.1}} & 69.8$\pm$0.2 & 74.0$\pm$0.5 & 76.6$\pm$0.5 \\
\cellcolor{lightgray}FedDisco      & \textcolor{blue}{\textbf{77.1$\pm$0.6}} & \textcolor{blue}{\textbf{77.6$\pm$0.9}} & \textcolor{blue}{\textbf{78.6$\pm$0.2}} & 75.3$\pm$0.2 & 76.1$\pm$0.6 & 77.9$\pm$0.3 & 69.9$\pm$0.2 & 75.2$\pm$0.5 & 77.1$\pm$0.4 \\
\cellcolor{lightgray}SFLV1         & 66.5$\pm$1.7 & 65.1$\pm$1.9 & 75.8$\pm$0.8 & 61.6$\pm$1.6 & 66.3$\pm$1.2 & 75.0$\pm$0.6 & 63.6$\pm$0.7 & 66.4$\pm$0.5 & 74.9$\pm$0.5 \\
\cellcolor{lightgray}SFLV2         & 64.8$\pm$3.2 & 75.0$\pm$1.6 & 77.6$\pm$0.8 & 69.4$\pm$1.5 & 68.4$\pm$1.6 & 75.8$\pm$0.8 & 69.0$\pm$1.5 & 74.1$\pm$0.8 & \textcolor{blue}{\underline{\textbf{79.7$\pm$0.8}}} \\
\cellcolor{lightgray}\textbf{TriCon-SF} & 76.3$\pm$0.3 & 76.2$\pm$0.6 & 78.1$\pm$1.1 & \textcolor{blue}{\textbf{76.5$\pm$0.7}} & \textcolor{blue}{\textbf{77.9$\pm$0.6}} & 77.5$\pm$0.6 & \textcolor{blue}{\textbf{75.4$\pm$0.4}} & \textcolor{blue}{\textbf{76.6$\pm$0.4}} & 78.1$\pm$0.4 \\
\midrule
\addlinespace[2pt]
\multicolumn{10}{p{\dimexpr0.9\textwidth-2\tabcolsep\relax}}{\textbf{Note:} Higher accuracy is better. The best accuracy in each column is marked in \textbf{\textcolor{blue}{blue}}. The overall best result for each dataset is \underline{underlined}.} \\
\bottomrule
\end{tabularx}
\end{table*}

\begin{figure*}[!htb]
    \centering
    \includegraphics[width=\textwidth]{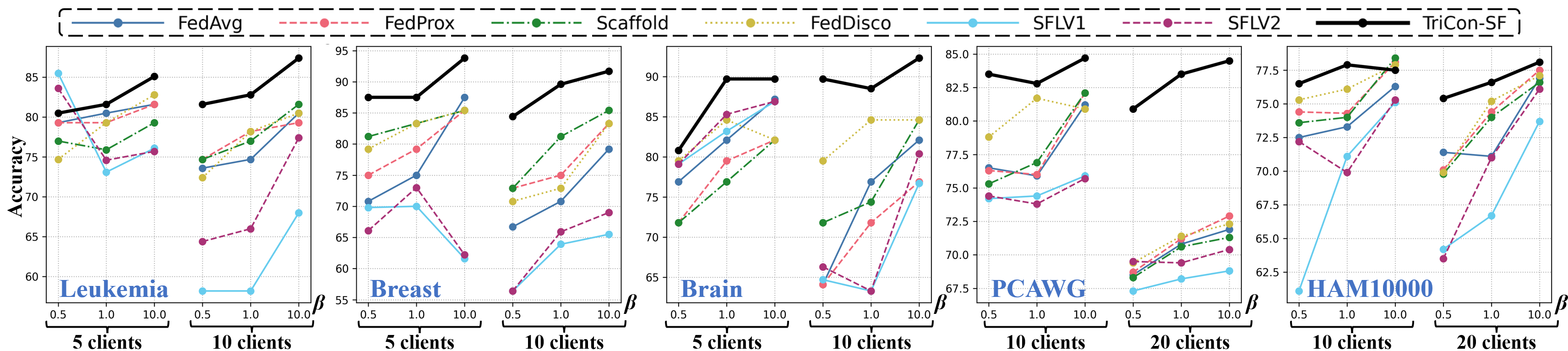}
    \caption{Test accuracy (\%) of different FL algorithms across five benchmark datasets.}
    \label{fig:acc_results}
\end{figure*}

\subsection{Accuracy evaluation}
We begin by evaluating the test accuracy of models trained using baseline algorithms across different client settings and levels of data heterogeneity. As shown in Tables \ref{tab:test_acc_cumida_final_transposed1} and \ref{tab:test_acc_cumida_final_transposed2}, as well as in the plots of Figure \ref{fig:acc_results}, TriCon-SF consistently achieves the highest accuracy across most datasets and experimental settings, demonstrating strong robustness and generalization.

Consistent with prior findings, we observe that performance typically degrades with increasing data heterogeneity (i.e., smaller $\beta$ values) and client numbers. This effect is particularly evident on more complex datasets such as PCAWG and HAM10000, where increasing the number of clients from 5 to 20 leads to substantial performance drops for most baseline methods. Notably, TriCon-SF remains resilient under such challenging settings, achieving over 83.5\% accuracy on PCAWG and 78.1\% on HAM10000 even at the largest scale (20 clients, $\beta$ = 10), while most baselines fall below 75\%.

Moreover, TriCon-SF consistently approximates centralized learning performance across all CUMIDA datasets. On Leukemia, for instance, TriCon-SF achieves 87.4\% accuracy at 10 clients and when $\beta$ = 10, only 1.1\% lower than the centralized result (88.5\%). Similarly, for the Brain dataset, TriCon-SF matches or exceeds 90\% accuracy under low-client settings (2 or 5 clients), with negligible loss compared to centralized training. Other methods, by contrast, typically suffer from 5\%–15\% accuracy drops under similar settings.

In summary, TriCon-SF not only exhibits strong performance in standard federated learning setups but also shows high robustness to distributional shifts and client scaling, making it a reliable solution for real-world distributed, multi-modal optimization scenarios.

\subsection{Communication cost evaluation}

We analyze the communication efficiency of the tested FL algorithms using three key metrics: communication rounds to convergence (R\#), total communication cost (C\#), and their normalized improvement rates over the baseline (R↑, C↑). To provide a clear basis for comparison, we define the total communication cost (C\#) based on the learning paradigm. Let $R_{\#}$ be the number of rounds to convergence, $n$ be the number of clients, and $M_{\text{size}}$ be the size of the model parameters.

For \textbf{parallel FL algorithms} (e.g., FedAvg, FedProx), where each client communicates with a central server in every round (one download, one upload), the total cost is calculated as:
\begin{equation}
    C_{\text{parallel}}^{\#} = R_{\#} \times n \times 2 \times M_{\text{size}}.
    \label{eq:cost_parallel}
\end{equation}

In contrast, for our \textbf{serial framework TriCon-SF}, a round consists of $n$ sequential client-to-client transmissions. The total cost is therefore:
\begin{equation}
    C_{\text{serial}}^{\#} = R_{\#} \times n \times M_{\text{size}}.
    \label{eq:cost_serial}
\end{equation}
This distinction highlights the inherent communication efficiency of the serial architecture. As shown in Table \ref{tab:fl_comparison_merged} and Figure \ref{fig:dp1} and Figure \ref{fig:dp2}, TriCon-SF significantly outperforms all baseline methods in both communication efficiency and cost-effectiveness across varying client counts and heterogeneity levels.

Across all $\beta$ settings and datasets, TriCon-SF consistently achieves the lowest R\# and C\#, indicating that it reaches target accuracy with far fewer communication rounds and lower overall cost. For example, under $\beta$ = 0.5 with 10 clients on PCAWG, TriCon-SF converges in just 18 rounds, yielding a 4.33× improvement in R↑ and a 2.17× improvement in C↑ over the FedAvg baseline. This trend persists even under more challenging conditions; at $\beta$ = 10 with 20 clients, TriCon-SF requires only 5 rounds on PCAWG and 13 rounds on HAM10000, reflecting dramatic improvements in both R↑ (15.4×, 12.85×) and C↑ (7.7×, 6.42×) respectively.

A particularly notable advantage of TriCon-SF is its scalability. While the communication rounds and costs of most baselines increase sharply with more clients, TriCon-SF maintains minimal growth. For example, when scaling from 10 to 20 clients under $\beta$ = 1 on PCAWG, TriCon-SF's round count decreases from 15 to 7 rounds, still outperforming all others, with a 11.29× improvement in R↑ and 5.25× in C↑ at 20 clients.
\begin{figure}[!htb]
    \centering
    \includegraphics[width=0.45\textwidth]{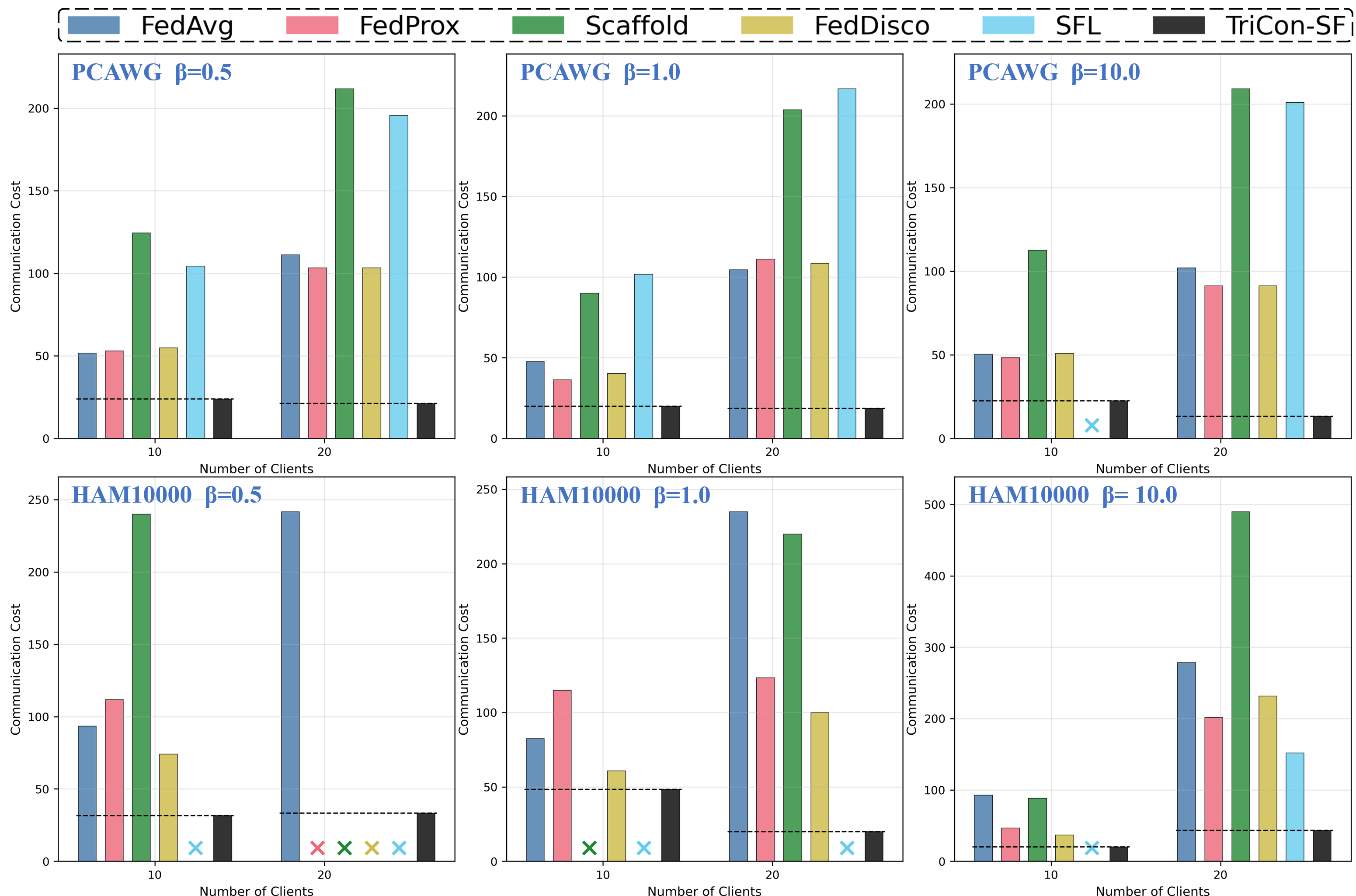}
    \caption{Comparison of communication cost needed for convergence across five benchmark datasets using different FL algorithms. Lower cost indicates higher communication efficiency.}
    \label{fig:dp1}
\end{figure}

\begin{figure}[!htb]
    \centering
    \includegraphics[width=0.45\textwidth]{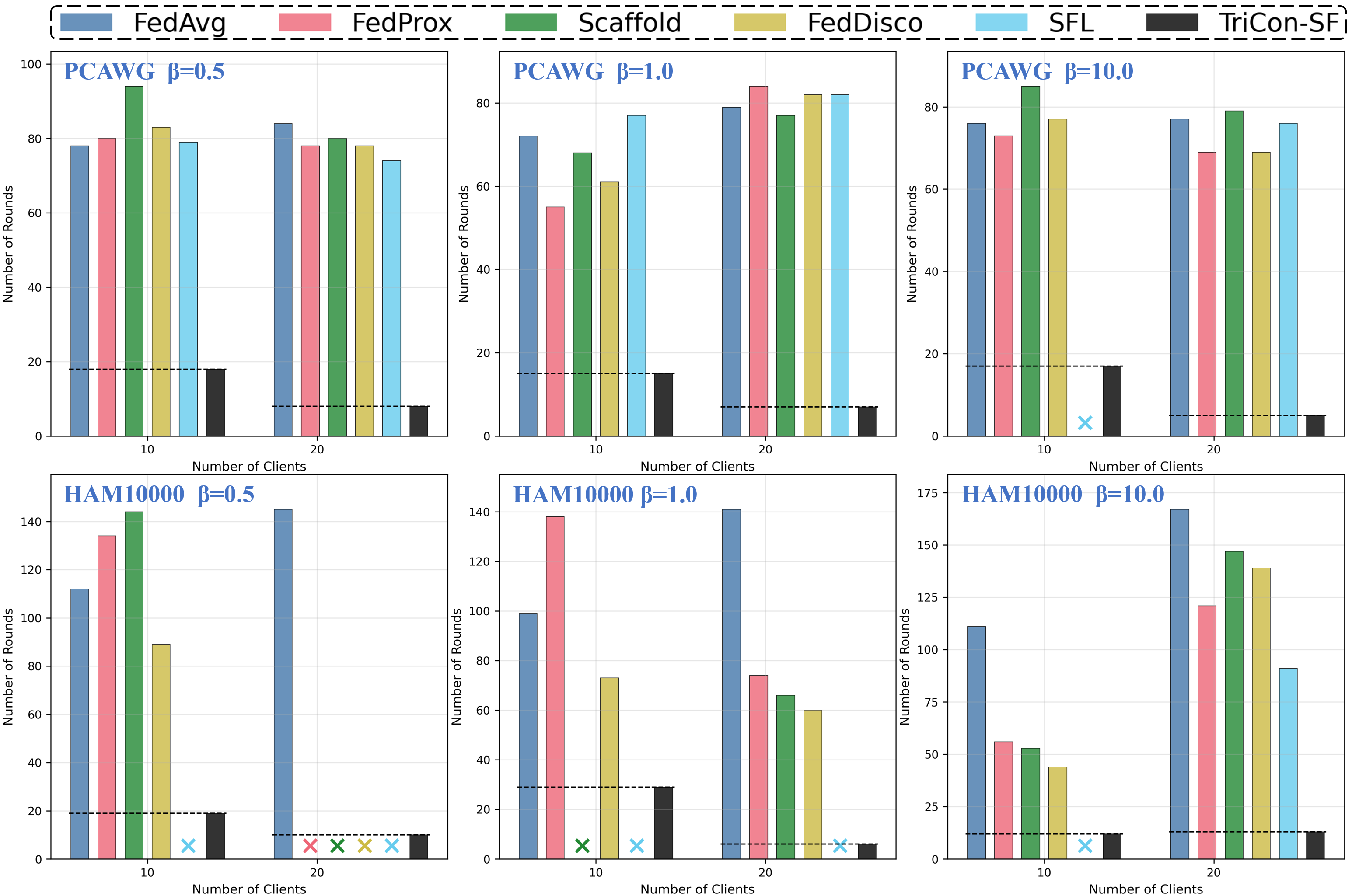}
    \caption{Comparison of communication rounds needed for convergence across five benchmark datasets using different FL algorithms. Lower round counts indicate faster convergence and higher communication efficiency.}
    \label{fig:dp2}
\end{figure}

\begin{figure}[!htb]
    \centering
    \includegraphics[width=0.45\textwidth]{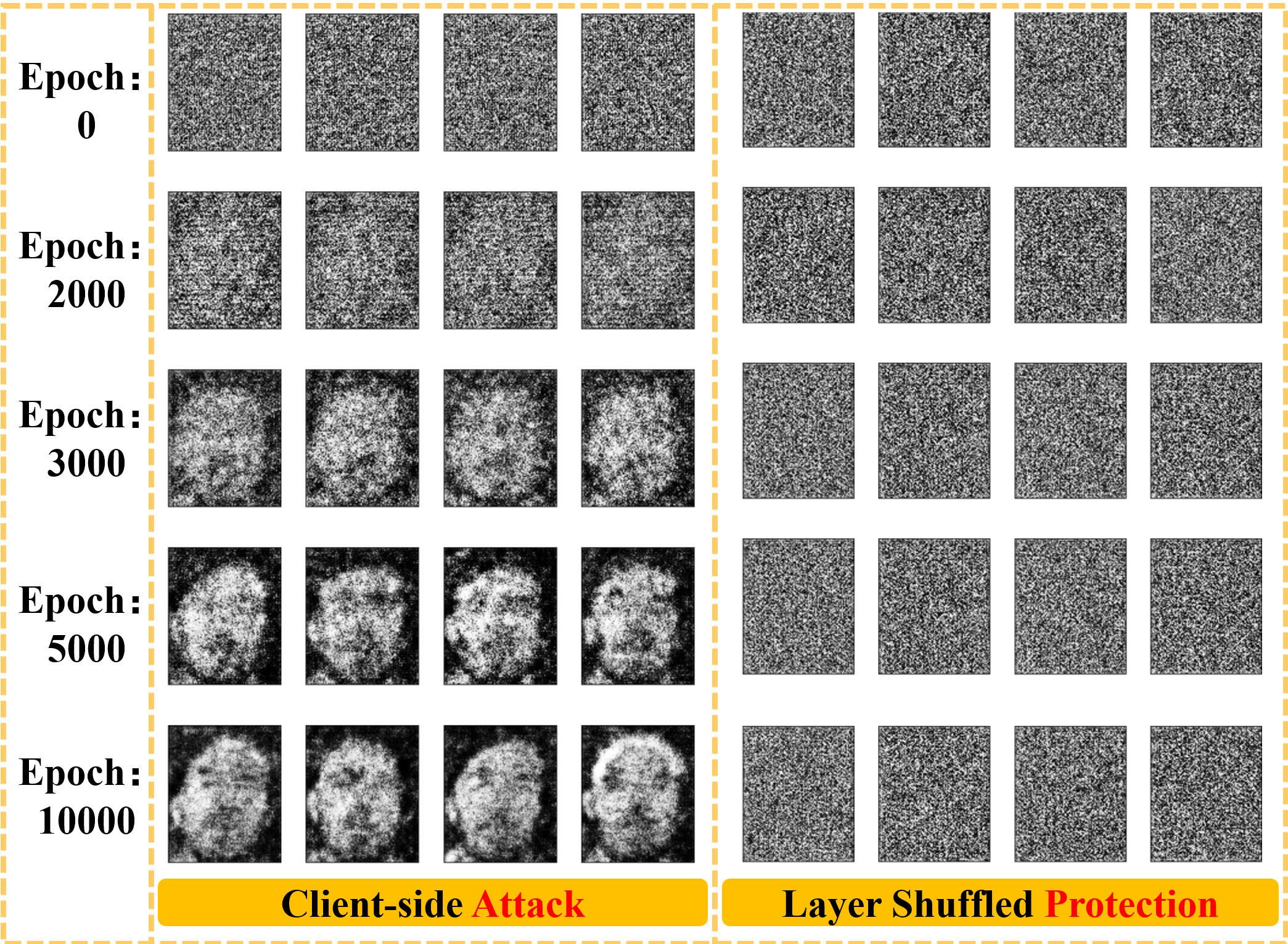}
    \caption{Visualization of client-side gradient inversion attacks on the ORL Face Dataset. Left: In a naive FL setup without protection, reconstructed images progressively reveal identifiable facial features as training epochs increase, demonstrating severe privacy leakage. Right: With our proposed layer shuffle mechanism combined with $L_2$ norm clipping and Gaussian noise injection, reconstructed images remain visually indistinguishable from noise across all epochs, effectively preventing identity inference.}
    \label{fig:dp_attack}
\end{figure}

\begin{table*}[!t]
\centering
\footnotesize
\setlength{\tabcolsep}{2.5pt}
\renewcommand{\arraystretch}{0.9}
\caption{Communication rounds (R\#), total costs (C\#), and improvement rates (R↑/C↑) on PCAWG and HAM10000.}
\label{tab:fl_comparison_merged}
\begin{tabularx}{\textwidth}{
    >{\raggedright\arraybackslash}p{2.2cm}|
    *{4}{>{\centering\arraybackslash}X}|
    *{4}{>{\centering\arraybackslash}X}|
    *{4}{>{\centering\arraybackslash}X}|
    *{4}{>{\centering\arraybackslash}X}@{}}
\toprule
\multirow{2}{*}{\textbf{ }} & \multicolumn{8}{c|}{\textbf{\textit{PCAWG}}} & \multicolumn{8}{c}{\textbf{\textit{HAM10000}}} \\
\cmidrule(lr){2-9} \cmidrule(lr){10-17}
& \multicolumn{4}{c|}{\textbf{10 Clients}} & \multicolumn{4}{c|}{\textbf{20 Clients}} & \multicolumn{4}{c|}{\textbf{10 Clients}} & \multicolumn{4}{c}{\textbf{20 Clients}} \\
\cmidrule(lr){2-5} \cmidrule(lr){6-9} \cmidrule(lr){10-13} \cmidrule(lr){15-17}
& R\# & R↑ & C\# & C↑ & R\# & R↑ & C\# & C↑ & R\# & R↑ & C\# & C↑ & R\# & R↑ & C\# & C↑ \\
\midrule

\rowcolor{lightgray}
\textbf{$\beta=0.5$} & \multicolumn{4}{c|}{\textit{Target ACC: 72.6\%}} & \multicolumn{4}{c|}{\textit{63.34\%}} & \multicolumn{4}{c|}{\textit{72.59\%}} & \multicolumn{4}{c}{\textit{72.16\%}} \\
\midrule
\cellcolor{lightgray}FedAvg          & 78 & 1.00× & 51.592 & 1.00× & 84 & 1.00× & 111.121 & 1.00× & 112 & 1.00× & 93.294 & 1.00× & 145 & 1.00× & 241.565 & 1.00× \\
\cellcolor{lightgray}FedProx         & 80 & 0.97× & 52.915 & 0.97× & 78 & 1.08× & 103.183 & 1.08× & 134 & 0.84× & 111.62 & 0.84× & N/A & N/A & N/A & N/A \\
\cellcolor{lightgray}Scaffold        & 94 & 0.83× & 124.349 & 0.41× & 80 & 1.05× & 211.658 & 0.53× & 144 & 0.78× & 239.899 & 0.39× & N/A & N/A & N/A & N/A \\
\cellcolor{lightgray}FedDisco        & 83 & 0.94× & 54.899 & 0.94× & 78 & 1.08× & 103.183 & 1.08× & 89 & 1.26× & 74.135 & 1.26× & N/A & N/A & N/A & N/A \\
\cellcolor{lightgray}SFL             & 79 & 0.99× & 104.29 & 0.49× & 74 & 1.14× & 195.51 & 0.57× & N/A & N/A & N/A & N/A & N/A & N/A & N/A & N/A \\
\cellcolor{lightgray}\textbf{TriCon-SF} & \textcolor{blue}{\textbf{18}} & \textcolor{blue}{\textbf{4.33×}} & \textcolor{blue}{\textbf{23.812}} & \textcolor{blue}{\textbf{2.17×}} & \textcolor{blue}{\textbf{8}} & \textcolor{blue}{\textbf{10.5×}} & \textcolor{blue}{\textbf{21.166}} & \textcolor{blue}{\textbf{5.25×}} & \textcolor{blue}{\textbf{19}} & \textcolor{blue}{\textbf{5.89×}} & \textcolor{blue}{\textbf{31.653}} & \textcolor{blue}{\textbf{2.95×}} & \textcolor{blue}{\textbf{10}} & \textcolor{blue}{\textbf{14.5×}} & \textcolor{blue}{\textbf{33.319}} & \textcolor{blue}{\textbf{7.25×}} \\
\midrule

\rowcolor{lightgray}
\textbf{$\beta=1$} & \multicolumn{4}{c|}{\textit{Target ACC: 70.33\%}} & \multicolumn{4}{c|}{\textit{65.44\%}} & \multicolumn{4}{c|}{\textit{73.73\%}} & \multicolumn{4}{c}{\textit{71.65\%}} \\
\midrule
\cellcolor{lightgray}FedAvg          & 72 & 1.00× & 47.623 & 1.00× & 79 & 1.00× & 104.506 & 1.00× & 99 & 1.00× & 82.465 & 1.00× & 141 & 1.00× & 234.901 & 1.00× \\
\cellcolor{lightgray}FedProx         & 55 & 1.31× & 36.379 & 1.31× & 84 & 0.94× & 111.121 & 0.94× & 138 & 0.72× & 114.952 & 0.72× & 74 & 1.91× & 123.281 & 1.91× \\
\cellcolor{lightgray}Scaffold        & 68 & 1.06× & 89.955 & 0.53× & 77 & 1.03× & 203.721 & 0.51× & N/A & N/A & N/A & N/A & 66 & 2.14× & 219.907 & 1.07× \\
\cellcolor{lightgray}FedDisco        & 61 & 1.18× & 40.347 & 1.18× & 82 & 0.96× & 108.475 & 0.96× & 73 & 1.36× & 60.808 & 1.36× & 60 & 2.35× & 99.958 & 2.35× \\
\cellcolor{lightgray}SFL             & 77 & 0.94× & 101.72 & 0.47× & 82 & 0.96× & 216.65 & 0.48× & N/A & N/A & N/A & N/A & N/A & N/A & N/A & N/A \\
\cellcolor{lightgray}\textbf{TriCon-SF} & \textcolor{blue}{\textbf{15}} & \textcolor{blue}{\textbf{4.8×}} & \textcolor{blue}{\textbf{19.843}} & \textcolor{blue}{\textbf{2.4×}} & \textcolor{blue}{\textbf{7}} & \textcolor{blue}{\textbf{11.29×}} & \textcolor{blue}{\textbf{18.52}} & \textcolor{blue}{\textbf{5.64×}} & \textcolor{blue}{\textbf{29}} & \textcolor{blue}{\textbf{3.41×}} & \textcolor{blue}{\textbf{48.313}} & \textcolor{blue}{\textbf{1.71×}} & \textcolor{blue}{\textbf{6}} & \textcolor{blue}{\textbf{23.5×}} & \textcolor{blue}{\textbf{19.992}} & \textcolor{blue}{\textbf{11.75×}} \\
\midrule

\rowcolor{lightgray}
\textbf{$\beta=10$} & \multicolumn{4}{c|}{\textit{Target ACC: 77.88\%}} & \multicolumn{4}{c|}{\textit{65.77\%}} & \multicolumn{4}{c|}{\textit{75.57\%}} & \multicolumn{4}{c}{\textit{75.87\%}} \\
\midrule
\cellcolor{lightgray}FedAvg          & 76 & 1.00× & 50.269 & 1.00× & 77 & 1.00× & 101.861 & 1.00× & 111 & 1.00× & 92.461 & 1.00× & 167 & 1.00× & 278.216 & 1.00× \\
\cellcolor{lightgray}FedProx         & 73 & 1.04× & 48.285 & 1.04× & 69 & 1.12× & 91.278 & 1.12× & 56 & 1.98× & 46.647 & 1.98× & 121 & 1.38× & 201.582 & 1.38× \\
\cellcolor{lightgray}Scaffold        & 85 & 0.89× & 112.443 & 0.45× & 79 & 0.97× & 209.013 & 0.49× & 53 & 2.09× & 88.296 & 1.05× & 147 & 1.14× & 489.794 & 0.57× \\
\cellcolor{lightgray}FedDisco        & 77 & 0.99× & 50.93 & 0.99× & 69 & 1.12× & 91.278 & 1.12× & 44 & 2.52× & 36.651 & 2.52× & 139 & 1.2× & 231.569 & 1.2× \\
\cellcolor{lightgray}SFL             & N/A & N/A & N/A & N/A & 76 & 1.01× & 200.80 & 0.51× & N/A & N/A & N/A & N/A & 91 & 1.84× & 151.79 & 1.83× \\
\cellcolor{lightgray}\textbf{TriCon-SF} & \textcolor{blue}{\textbf{17}} & \textcolor{blue}{\textbf{4.47×}} & \textcolor{blue}{\textbf{22.489}} & \textcolor{blue}{\textbf{2.24×}} & \textcolor{blue}{\textbf{5}} & \textcolor{blue}{\textbf{15.4×}} & \textcolor{blue}{\textbf{13.229}} & \textcolor{blue}{\textbf{7.7×}} & \textcolor{blue}{\textbf{12}} & \textcolor{blue}{\textbf{9.25×}} & \textcolor{blue}{\textbf{19.992}} & \textcolor{blue}{\textbf{4.62×}} & \textcolor{blue}{\textbf{13}} & \textcolor{blue}{\textbf{12.85×}} & \textcolor{blue}{\textbf{43.315}} & \textcolor{blue}{\textbf{6.42×}} \\
\bottomrule
\addlinespace[2pt]
\multicolumn{17}{p{\dimexpr \textwidth-2\tabcolsep\relax}}{\footnotesize\textbf{Note:} Lower R\# and C\# are better; higher R↑ and C↑ are better. Best results are \textbf{\textcolor{blue}{blue}}. "N/A" means the method failed to reach the target convergence accuracy. "1.00×" is baseline (FedAvg).}
\end{tabularx}
\end{table*}

\subsection{Security evaluation}
To assess the privacy-preserving effectiveness of our layer shuffle mechanism in TriCon-SF, we perform visual inversion attacks on the ORL Face Dataset \cite{faruqe2009face}, a standard benchmark comprising 400 grayscale images (92×112 pixels) from 40 individuals. The dataset introduces realistic variations in facial expressions, poses, illumination, and occlusion, making it well-suited for evaluating model vulnerability under FL.

We adopt a client-side gradient inversion attack, targeting intermediate features shared during FL training to reconstruct private data. As shown in the left panel of Figure~\ref{fig:dp_attack}, the naive FL setup (without protection) permits progressive identity leakage: coarse facial structures emerge as early as Epoch 3000, and detailed reconstructions become visually identifiable by Epochs 5000 and 10000. This highlights the significant privacy risks in high-dimensional FL scenarios.

To counteract this, we incorporate a layer shuffle mechanism that disrupts representational structure by randomly permuting hidden layer outputs. This is coupled with $L_2$ norm clipping (threshold $c = 1.0$) and Gaussian noise injection (zero-mean, $\sigma = 1.0$) to enforce differential privacy. Clipping ensures bounded sensitivity by limiting each feature vector's magnitude, while noise injection adds uncertainty to obfuscate reconstruction.

The right panel of Figure~\ref{fig:dp_attack} confirms the effectiveness of our approach: even after 10,000 epochs, reconstructed images remain visually indistinguishable from noise, with no identifiable patterns. These results validate that layer shuffling introduces structural obfuscation that blocks attack viability and offers strong empirical resilience against client-side privacy attacks in FL.

\section{CONCLUSION}
In this paper, we propose a novel paradigm for serial FL, termed TriCon-SF, which integrates a three-layer shuffle mechanism with a contribution-aware evaluation strategy. This design enhances both the robustness and accountability of the FL process. Experimental results across five datasets demonstrate that TriCon-SF consistently outperforms existing FL baselines in terms of accuracy and computational efficiency. Moreover, both theoretical proofs and empirical analyses confirm the framework’s resilience against gradient leakage attacks and its ability to detect malicious clients.

Despite these promising results, the primary limitation of TriCon-SF lies in the trade-off between handling data heterogeneity and mitigating catastrophic forgetting. As such, future work will focus on integrating TriCon-SF with continual learning techniques to address the challenge of catastrophic forgetting in serial FL settings.

\bibliographystyle{IEEEtran}
\bibliography{ref}

\end{document}